% This must be in the first 5 lines to tell arXiv to use pdfLaTeX, which is strongly recommended.
\pdfoutput=1
% In particular, the hyperref package requires pdfLaTeX in order to break URLs across lines.

\documentclass[11pt]{article}

% Remove the "review" option to generate the final version.
\PassOptionsToPackage{dvipsnames,usename}{xcolor}
\usepackage{acl}

% Standard package includes
\usepackage{times}
\usepackage{latexsym}

% For proper rendering and hyphenation of words containing Latin characters (including in bib files)
\usepackage[T1]{fontenc}
% For Vietnamese characters
% \usepackage[T5]{fontenc}
% See https://www.latex-project.org/help/documentation/encguide.pdf for other character sets

% This assumes your files are encoded as UTF8
\usepackage[utf8]{inputenc}

% This is not strictly necessary, and may be commented out,
% but it will improve the layout of the manuscript,
% and will typically save some space.
\usepackage{microtype}

\usepackage{graphicx}
\usepackage{dsfont}
\usepackage{amsmath,amsfonts,bm,mathtools,amssymb}
\usepackage{cleveref}
\usepackage{dsfont}
\usepackage{caption}
\usepackage{subcaption}
\usepackage{xcolor}
\usepackage{soul}
\usepackage{hyperref}
\usepackage{adjustbox}
\usepackage{booktabs}
\usepackage{makecell}
\usepackage{nicefrac}
\usepackage{xspace}
\usepackage{amsfonts}
\usepackage{amsthm}
\usepackage{amsmath}
\usepackage{amssymb}
\usepackage{thmtools} 
\usepackage{thm-restate}
\usepackage{scalerel}
\usepackage{float}
\usepackage{flafter}
\usepackage{svg}
\usepackage{mathrsfs}

% Comments
% =====================

\setlength{\marginparwidth}{2.1cm} 
\usepackage[textsize=tiny,disable]{todonotes}
 % default note settings, used by macros below

% =====================

% Theorem Environment commands
\newtheorem{defin}{Definition}

% Cleverref commands
\usepackage{cleveref}
\crefname{section}{\S}{\S\S}
\Crefname{section}{\S}{\S\S}
\crefname{table}{Tab.}{Tabs.}
\crefname{figure}{Fig.}{Figs.}
\crefname{algorithm}{Alg.}{}
\crefname{equation}{Eq.}{Eqs.}
\crefname{appendix}{App.}{}
\crefname{theorem}{Theorem}{}
\crefname{proposition}{Proposition}{}
\crefname{defin}{Definition}{}
\crefname{cor}{Corollary}{}
\crefname{observation}{Observation}{}
\crefname{assumption}{Assumption}{}
\crefformat{section}{\S#2#1#3}
\crefformat{footnote}{#2\footnotemark[#1]#3}

% If the title and author information does not fit in the area allocated, uncomment the following
%
\setlength\titlebox{5cm}
\title{Towards a Similarity-adjusted Surprisal Theory}

\author{
    Clara Meister \ \ \ \ \ Mario Giulianelli \ \ \ \ \ Tiago Pimentel \\
    ETH Zürich, Department of Computer Science, Institute for Machine Learning\\
    \texttt{\{\href{clara.meister@inf.ethz.ch}{clara.meister},\href{mario.giulianelli@inf.ethz.ch}{mario.giulianelli},\href{tiago.pimentel@inf.ethz.ch}{tiago.pimentel}\}@inf.ethz.ch}\\
    \setlength{\fboxsep}{2.5pt}%
    \setlength{\fboxrule}{2.5pt}%
    \fcolorbox{white}{white}{
        \includesvg[width=.15\linewidth]{ethz-logo}
    }
}

\begin{document}
\maketitle

\newcommand{\dist}{d}
\newcommand{\topic}{t}
\newcommand{\entropy}{\mathrm{H}}
\newcommand{\defeq}{\mathrel{\stackrel{\textnormal{\tiny def}}{=}}}
\newcommand{\dataset}{{\cal D}}

\definecolor{mygray}{rgb}{.3, .3, .3}
\definecolor{mygreen}{rgb}{0, .5, 0}
\definecolor{myred}{rgb}{.6, 0.15, 0.15}

\newcommand{\coloursim}{ForestGreen}
\newcommand{\colourmeaning}{blue}
\newcommand{\colourcontext}{orange}
\newcommand{\colourdist}{purple}

\newcommand{\mywordfunc}[2]{\newcommand{#1}{#2}}
\newcommand{\mymeaningfunc}[2]{\newcommand{#1}{#2}}
\newcommand{\mycontextfunc}[2]{\newcommand{#1}{#2}}
\newcommand{\mydistfunc}[2]{\newcommand{#1}{{\color{\colourdist}#2}}}
\newcommand{\mysimfunc}[2]{\newcommand{#1}{{\color{\coloursim}#2}}}
\newcommand{\newtext}[1]{{\color{blue} #1}}

\newcommand{\addcites}{{\color{red}(add cites)}\xspace}
\newcommand{\writemore}{{\color{red}...(write more)}\xspace}
\mywordfunc{\word}{w}
\mywordfunc{\Word}{W}
\mycontextfunc{\words}{\boldsymbol{w}}
\mycontextfunc{\Words}{\boldsymbol{W}}
\newcommand{\vocab}{\mathcal{V}}
\newcommand{\calR}{\mathcal{R}}
\newcommand{\surp}{\mathrm{h}}
\newcommand{\defn}[1]{\textbf{#1}}
\mymeaningfunc{\meaningvar}{m}
\mymeaningfunc{\Meaningvar}{\mathrm{M}}
\newcommand{\meaningspace}{\mathcal{M}}
\newcommand{\costfunc}{\mathrm{cost}}
\newcommand{\kl}{\mathrm{KL}}
\newcommand{\llh}{\mathcal{L}}
\newcommand{\deltallh}{\Delta_{\llh}}

\mydistfunc{\distfuncbase}{d}
\newcommand{\infovalue}{\mathrm{i}_{\distfuncbase}}
\newcommand{\distfunc}{\distfuncbase_{\words_{<t}}}
\mydistfunc{\distfuncvecbase}{\boldsymbol{d}}
\newcommand{\distfuncvec}{\distfuncvecbase_{\words_{<t}}}
\mydistfunc{\distmatrixbase}{\mathrm{D}}
\newcommand{\distmatrix}{\distmatrixbase_{\words_{<t}}}
\mysimfunc{\similarityfuncbase}{z}
\newcommand{\similarityfunc}{\similarityfuncbase_{\words_{<t}}}
\newcommand{\probvector}{\mathbf{p}}
\mysimfunc{\similarityvectoruncontext}{\mathbf{z}}
\newcommand{\similarityvector}{\similarityvectoruncontext_{\word_t}}
\mysimfunc{\similaritymatrix}{\mathrm{Z}}
\newcommand{\embed}{\phi}
\newcommand{\alternativehyp}{\mathcal{A}_{\words_{<t}}}
\newcommand{\diverseent}{\entropy_{\similarityfuncbase}}
\newcommand{\diversesurp}{\surp_{\similarityfuncbase}}
\newcommand{\diversesurpalpha}{\surp_{\similarityfuncbase^{\alpha}}}
\newcommand{\diversekl}{\kl_{\similarityfuncbase}}
\newcommand{\diversecostfunc}{\costfunc_{\similarityfuncbase}}
\newcommand{\pz}{p_{\similarityfuncbase}}

\newcommand{\mathcomment}[1]{\text{\color{gray}#1}}

\newcommand{\yhat}{\widehat{y}}
\newcommand{\vpsi}{\boldsymbol{\psi}}
\newcommand{\regressparams}{\vpsi}
\newcommand{\model}{f_{\vpsi}}
\newcommand{\modelone}{f_{\vpsi_1}}
\newcommand{\modeltwo}{f_{\vpsi_2}}
\newcommand{\pmodel}{p_{\vpsi}}
\newcommand{\vars}{\mathbf{x}}
\newcommand{\vtheta}{\boldsymbol{\theta}}
\newcommand{\gptsmall}{GPT-2 \texttt{small}\xspace}

\newcommand{\ptheta}{p_{\scaleto{\vtheta}{4pt}}}

\newcommand{\diversesurprisal}{similarity-adjusted surprisal\xspace}
\newcommand{\DiverseSurprisal}{Similarity-adjusted Surprisal\xspace}
\newcommand{\Diversesurprisal}{Similarity-adjusted surprisal\xspace}
\newcommand{\diverse}{similarity-adjusted\xspace}
\newcommand{\Diverse}{Similarity-adjusted\xspace}

\begin{abstract}
Surprisal theory posits that the cognitive effort required to comprehend a word is determined by its contextual predictability, quantified as surprisal. 
Traditionally, surprisal theory treats words as distinct entities, overlooking any potential similarity between them.
\citet{giulianelli-etal-2023-information} address this limitation by introducing information value, a measure of predictability designed to account for similarities between communicative units. 
Our work leverages \citeposs{ricotta2006towards} diversity index to extend surprisal into a metric that we term \diversesurprisal, exposing a mathematical relationship between surprisal and information value. 
\Diversesurprisal aligns with information value when considering graded similarities and reduces to standard surprisal when words are treated as distinct. Experimental results with reading time data indicate that \diversesurprisal adds predictive power beyond standard surprisal for certain datasets, suggesting it serves as a complementary measure of comprehension effort.
\end{abstract}

\section{Introduction}

Surprisal theory \cite{hale2001probabilistic} states that the effort a reader must spend to comprehend a word is a function of its contextual predictability, which is typically quantified as its surprisal, or negative log-probability. 
With numerous empirical \citep[\textit{inter alia}]{smith2008optimal,smith2013-log-reading-time,shain-2019-large,shain-2021-cdrnn} and theoretical \citep{levy2008expectation} studies supporting its claims, surprisal theory is a widely-accepted model of the effort required 
for sentence comprehension. 
Notably, surprisal theory treats words as completely distinct from one another, disregarding that they may express similar meanings. 
Motivated by this shortcoming, \citet{giulianelli-etal-2023-information} proposed a new measure of comprehension effort: \defn{information value}.
Similarly to surprisal, information value quantifies the predictability of a linguistic unit in context;
unlike surprisal, however, it accounts for 
communicative equivalences between possible continuations.
\citeauthor{giulianelli-etal-2023-information} find this metric to be 
a significant predictor of utterance-level reading times and acceptability judgments, both independently and in addition to surprisal.

Similarly inspired, we investigate \defn{\diversesurprisal} as a potential measure of comprehension effort. 
This measure is a natural extension of \citeposs{ricotta2006towards} diversity index---which itself is a generalization of Shannon's entropy used to measure species biodiversity. 
Given a choice of similarity function, \diversesurprisal computes a word's predictability while considering its likeness to alternative continuations. %, weighted by these continuations' contextual probabilities. 
Through this measure, we connect information value and standard surprisal, showing a mathematical relationship between these two metrics: \diversesurprisal has a monotonically increasing relationship with information value and reverts to standard surprisal when its similarity function regards different words as completely distinct.\looseness-1

In experiments with reading time data, we explore the psycholinguistic predictive power of \diversesurprisal with semantic, syntactic and orthographic notions of word distance. 
For some datasets, we see that---as with information value---\diversesurprisal provides significant predictive power above and beyond standard surprisal. 
This complementarity suggests that there are aspects of incremental comprehension effort that are not fully captured by the classic definitions used in surprisal theory. 
We also observe that non-contextual notions of semantic distance lead to better predictors than using contextual notions of distance, which supports observations that incremental comprehension effort is (at least partially) driven by shallow semantic processing.\looseness=-1

\section{Background}
This section presents surprisal and information value, providing the relevant formal background for our \diversesurprisal{}.
We will use $\vocab$ to refer to the vocabulary, i.e., a finite, non-empty set of words, and $\vocab^{*}$ to refer to the set of all strings formed by concatenating words in $\vocab$.
We denote words as $\word \in \vocab$, and strings (sequences of words) as $\words \in \vocab^{*}$.  An index $t$, e.g., $\word_t$ and $\words_{<t}$, is used to mark positions within a string.

\subsection{Surprisal Theory}

According to surprisal theory, %\cite{hale2001probabilistic,levy2008expectation}, 
comprehending a word $\word_t \in \vocab$ in its context $\words_{<t} \in \vocab^*$ requires a reader to update their beliefs about the intended meaning of the sentence,
performing probabilistic inference over the space of possible meanings \cite{hale2001probabilistic,levy2008expectation}.
The cost of this belief update is equal to the word's \defn{surprisal}, or information content, whose formal definition is:\footnote{See \cref{app:cost} for a derivation and discussion of the relationship between $\surp(\word_t)$ and processing cost.}\looseness=-1
\begin{align}\label{eq:surprisal}
    \surp(\word_t) \defeq -\log p(\word_t \mid \words_{<t})
\end{align}
If surprisal theory provides an accurate account of sentence comprehension, then we should find traces of this online inferential process
in humans' behavioral responses to language comprehension tasks.
In particular, assuming that a word's processing cost is reflected in its \defn{reading time} (RT), a word's RT should be an increasing function
of its surprisal \citep{smith2008optimal}. 
A large body of empirical work has examined the relationship between RT and surprisal, with results supporting surprisal theory \citep[][\emph{inter alia}]{smith2013-log-reading-time,wilcox2023testing,shain2024large}.
Given these established results, new predictors of reading behavior, such as information value, would benefit from grounding in surprisal theory.\looseness=-1

\subsection{Information Value}

\newcommand{\powerset}{\mathcal{P}}
\newcommand{\R}{\mathbb{R}}
\newcommand{\N}{\mathbb{N}}

\citet{giulianelli-etal-2023-information} recently introduced a new measure to predict the cost associated with reading: \defn{information value}. 
Let $\alternativehyp \mathop{\in} \mathscr{P}({\vocab^{*}})$ be a multiset of plausible alternative continuations that a reader may expect to follow a given context $\words_{<t}$.\footnote{$\mathscr{P}({\vocab^{*}})$ is 
the set of all multisets of elements in $\vocab^{*}$.\looseness-1}
The information value of a continuation $\words_{\geq t} \mathop{\in} \vocab^{*}$ 
is defined as how different it is from continuations in $\alternativehyp$.
If $\words_{\geq t}$ is similar to what a reader expects, i.e., to elements of $\alternativehyp$, then it does not convey much information, and should thus require little effort to process. If $\words_{\geq t}$ differs greatly from expected continuations, then it conveys more information and its processing cost is higher.
Formally, we write $\words_{\geq t}$'s information value as
$\distfuncvec(\words_{\geq t}, \alternativehyp)$, where $\distfuncvec: \vocab^{*} \times \mathscr{P}(\vocab^{*}) \to \R_{+}$ is a context-conditioned distance function.\footnote{This function may also be constant in $\words_{<t}$, for example, if $\distfunc$ measures orthographic distance between different $\words_{\geq t}$.\looseness=-1}

Following \citeauthor{giulianelli-etal-2023-information}, we consider alternative sets $\alternativehyp$
whose elements are sampled independently from $p(\cdot \!\mid \words_{<t})$
and distance functions $\distfuncvec$ which apply element-wise to each instance in the alternative set through $\distfunc: \vocab^* \times \vocab^* \to \R_{+}$; we then aggregate individual distances by taking their mean. 
However, to make the comparison to standard next-word surprisal more natural, we take $\alternativehyp$ to be composed of individual words rather than full string continuations.
We will thus use the notation:
\begin{align}
    &\distfuncvec(\word, \alternativehyp) = \frac{\sum_{\word' \in \alternativehyp} \distfunc(\word, \word')}{|\alternativehyp|}  \label{eq:mc}
\end{align}
This is a Monte Carlo estimator of the expected distance of a word $\word$ from other next words that start continuations of $\words_{<t}$ \citep{giulianelli-etal-2024-generalized}. We refer to it as \defn{next-word information value}:
\begin{align}\label{eq:expectation}
        \infovalue(\word_t) &\defeq 
   % \expect_{\word' \sim p(\cdot \mid \words_{<t})} \left[ \distfunc(\word_t, \word')\right ]
   % \\ &= 
    \sum_{\word' \in \vocab} \distfunc(\word_t, \word')\, p(\word' \mid \words_{<t})
\end{align}

\section{\DiverseSurprisal Theory}

To bridge the theoretical gap between surprisal and information value,
we wish to derive a notion of per-word information content that accounts for similarities between different plausible continuations, rather than treating words as completely distinct outcomes.
To this end, we turn to diversity indices: metrics developed in the field of biology to quantify biodiversity.\footnote{For an overview, see \citet{leinster2012measuring}.} Analogous to our setting, when quantifying biodiversity, it is desirable to have a metric that takes into account that some species are more closely related to each other (e.g., two species in the same genus vs.\ in different genera). We adapt one of these metrics to our context.

\subsection{\Diverse Entropy and Surprisal}

Let $R$ be a categorical random variable that takes on values $r\in\calR$. 
Further, let $\similarityfuncbase: \calR \mathop{\times} \calR \mathop{\to} [0, 1]$ be a \defn{similarity function}; it is 0 when $r$ and $r'$ are completely dissimilar and 1 when they are equivalent. 
\citeposs{ricotta2006towards} diversity index is then defined as: 
\begin{align}
    \diverseent(R) \defeq\label{eq:div_ent} -\sum_{r\in \calR} p(r) \log \sum_{r'\in \calR} \similarityfuncbase(r, r') p(r')
\end{align}
If we use an identity similarity function such that $\similarityfuncbase(r, r')\mathop{=}1$ if $r\mathop{=}r'$ and $0$ otherwise, then \cref{eq:div_ent} is equivalent to Shannon's entropy \cite{shannon1948mathematical}. We thus refer to \cref{eq:div_ent} as \defn{\diverse entropy}.\looseness-1

Bearing in mind entropy's close mathematical relationship to surprisal (i.e., entropy is the expected value of surprisal), \diverse entropy can be extended to a notion of surprisal that accounts for similarities between classes. 
We define the \defn{\diversesurprisal
} of outcome $r$ as: 
\begin{align}\label{eq:div_surp_base}
    \diversesurp(r) \defeq - \log \sum_{r' \in \calR} \similarityfuncbase(r, r')\, p(r')
\end{align} 
Comparably to \cref{eq:div_ent}, when the identity similarity function is used, then $\diversesurp(r) = - \log p(r) = \surp(r)$. 
While a variant of \diversesurprisal has been used for measuring semantic uncertainty in neural machine translation \cite{cheng-vlachos-2024-measuring}, its application in psycholinguistics has not yet been explored.

\subsection{\DiverseSurprisal and Information Value}

\newcommand{\monotonicfunc}{\overset{\propto}{\nearrow}}

Now let $\similarityfunc: \vocab \times \vocab \to [0, 1]$ be a similarity function that measures how similar two words are in context $\words_{<t}$. 
By using $\similarityfunc$ in \cref{eq:div_surp_base}, we arrive at a notion of \diversesurprisal for a word in context.
\begin{defin} 
    The \diversesurprisal $ \diversesurp$ of a word $\word_t$ in context $\words_{<t}$ is defined as:
    \begin{align}\label{eq:div_surp}
        \diversesurp(\word_t) &\mathop{\defeq}
        - \log \!\! \sum_{\word' \in \vocab} \!\! \similarityfunc(\word_t, \word')\, p(\word' \mathop{\mid} \words_{<t})
    \end{align}
\end{defin}
% }
Given this definition, we can now present the main theoretical result of this paper.

\begin{restatable}{theorem}{semsurpvsinfovalue} 
\label{theorem:semsurp_vs_infovalue}
\!Let $\distfunc\!\!:\!\vocab\times\!\vocab\!\to\![0, 1]$ 
and $\similarityfunc (\word_t, \word')\!=\!1 -\distfunc (\word_t, \word')$.  
Under these settings, next-word information value and \diversesurprisal have a monotonic, strictly increasing relationship. 
% \begin{align}
%     \infovalue(\word_t) \monotonicfunc \diversesurp(\word_t)
% \end{align}
%
\end{restatable}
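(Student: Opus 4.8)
The plan is to show that $\diversesurp(\word_t)$ is an explicit, strictly increasing function of $\infovalue(\word_t)$; the monotonic relationship then follows immediately. First I would substitute $\similarityfunc(\word_t, \word') = 1 - \distfunc(\word_t, \word')$ into the definition of \diversesurprisal (\cref{eq:div_surp}) and split the resulting sum:
\[
\diversesurp(\word_t) = -\log \sum_{\word' \in \vocab} \bigl(1 - \distfunc(\word_t, \word')\bigr)\, p(\word' \mid \words_{<t}) = -\log\!\Bigl( \sum_{\word' \in \vocab} p(\word' \mid \words_{<t}) - \sum_{\word' \in \vocab} \distfunc(\word_t, \word')\, p(\word' \mid \words_{<t}) \Bigr).
\]
The first inner sum equals $1$ because $p(\cdot \mid \words_{<t})$ is a probability distribution over $\vocab$, and the second inner sum is exactly $\infovalue(\word_t)$ by \cref{eq:expectation}. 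Hence we obtain the identity $\diversesurp(\word_t) = -\log\!\bigl(1 - \infovalue(\word_t)\bigr)$, valid for any context $\words_{<t}$ and word $\word_t$.

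Next I would argue that $g(x) \defeq -\log(1-x)$ is well-defined and strictly increasing on the range of values $\infovalue(\word_t)$ can take. Since $\distfunc$ maps into $[0,1]$ and $p(\cdot \mid \words_{<t})$ is a distribution, $\infovalue(\word_t) \in [0,1]$; moreover, assuming that a word is maximally similar to itself---i.e.\ $\distfunc(\word_t, \word_t) = 0$, equivalently $\similarityfunc(\word_t, \word_t) = 1$---and that the observed word has nonzero probability, $p(\word_t \mid \words_{<t}) > 0$, we get $\infovalue(\word_t) \le 1 - p(\word_t \mid \words_{<t}) < 1$, so the argument of the logarithm is strictly positive. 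On $[0,1)$ we have $g'(x) = \tfrac{1}{1-x} > 0$, so $g$ is strictly increasing there; therefore $\diversesurp(\word_t) = g\bigl(\infovalue(\word_t)\bigr)$ is a strictly increasing function of $\infovalue(\word_t)$, which is precisely the claimed relationship. (Composing with the inverse, $\infovalue(\word_t) = 1 - e^{-\diversesurp(\word_t)}$ exhibits the relationship in the other direction.)

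The only subtlety---and the ``hard part,'' though it is minor---is ensuring the argument of the logarithm is strictly positive and that the relationship is genuinely strict rather than weakly monotone; this is exactly where the mild assumptions $\distfunc(\word_t, \word_t) = 0$ and $p(\word_t \mid \words_{<t}) > 0$ enter. If one prefers to avoid these assumptions entirely, the statement can equivalently be phrased on the set where $\diversesurp(\word_t)$ is finite, i.e.\ wherever $\infovalue(\word_t) < 1$, with the identity $\diversesurp(\word_t) = -\log(1 - \infovalue(\word_t))$ carrying all the content. Everything else in the proof is a single line of algebraic rearrangement together with the elementary monotonicity of $-\log(1-x)$.
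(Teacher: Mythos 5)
Your proposal is correct and follows essentially the same route as the paper: substitute $\similarityfunc = 1 - \distfunc$, use that $p(\cdot \mid \words_{<t})$ sums to one to get the identity $\diversesurp(\word_t) = -\log\bigl(1 - \infovalue(\word_t)\bigr)$, and invoke the strict monotonicity of the (negative) logarithm. Your extra care about the argument of the logarithm being strictly positive (via $\distfunc(\word_t,\word_t)=0$ and $p(\word_t \mid \words_{<t})>0$) is a minor refinement the paper's proof leaves implicit, but it does not change the argument.
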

\begin{proof}
    Proof in \cref{app:semsurp_vs_infovalue}.
\end{proof}
\noindent Note that this result trivially extends to information value and \diversesurprisal measured over finite strings $\words_{\geq t} \mathop{\in} \vocab^{*}$ of arbitrary length.

Because standard surprisal is a special case of \diversesurprisal, this result connects surprisal and information value; \citeauthor{giulianelli-etal-2023-information}'s findings can thus be seen as supporting an enriched notion of surprisal theory. 
\Cref{app:proof_semanticsurprisaltheorycost} shows how to use this relationship to derive a \diverse definition of processing cost.\looseness=-1

\subsection{Related Theories in Psycholinguistics}
%The exact mechanisms behind language comprehension in the brain are not known. 
Several prior works in psycholinguistics have also examined variants of standard surprisal, e.g., decomposing \cite{roark-etal-2009-deriving,li-2023}, augmenting \cite{aurnhammer-2021}, or revising it \cite{arehalli-etal-2022-syntactic,giulianelli-etal-2023-information,giulianelli-etal-2024-generalized,giulianelli-etal-2024-incremental}. 
Some of these are motivated by the belief that the language comprehension process can be broken down into multiple distinct subtasks, for which there are different processing mechanisms \cite{Kuperberg-2016}; they then associate %these 
variants of surprisal with these different cognitive processes. \citet{roark-etal-2009-deriving}, for instance, proposes that surprisal can be decomposed into a syntactic and a semantic component, each associated with its own cognitive process. 
In contrast to some of these works---for a subset of which we provide more detailed descriptions in \cref{app:related_work}---we do not propose a decomposition of or alternative to surprisal theory. 
Rather, we view our work as offering a revised mathematical definition of surprisal, but still within the original surprisal theory framework.\looseness=-1

\section{Experimental Methodology}

\newcommand{\yn}{y_n^{(i)}}
\newcommand{\ybar}{\bar{y}}
\newcommand{\ybarn}{\ybar_n}
\newcommand{\varsn}{\vars_n}

%Here we introduce our experimental methodology.

\subsection{Data}

We consider four datasets of naturalistic reading: Brown \citep{smith2013-log-reading-time}, Dundee \citep{dundee}, Natural Stories \citep{futrell-etal-2018-natural}, and Provo \citep{provo}. 
To collect these datasets, participants were administered text passages to read, and the time they spent fixating on each word was recorded. More details are provided in \cref{app:exp_setup}. 
We organize these measurements into data points consisting of $\langle\varsn, \yn\rangle$ pairs, where $\yn \in \R_{+}$ is participant $i$'s RT of word $\word_n$, and $\varsn \in \R^{d}$ are word $\word_n$'s characteristics. 
These characteristics---which we refer to as predictors---consist of quantities such as a word's surprisal or unigram frequency.
Following prior work \citep[][\emph{inter alia}]{wilcox2020predictive,meister-etal-2021-revisiting}, we average RTs across participants, resulting in a single mean RT per word, $\ybarn$. Our models are trained and tested to predict these averages.

\subsection{Reading Time Regressors}

Let $\model$ be a function that takes $\varsn$ and predicts~$\ybarn$.
To avoid overlap in terminology with our discussion of language models, we refer to $\model$ as a regressor, and denote its parameters as $\regressparams$.
A regressor $\model$ can take different functional forms. In light of prior work showing the surprisal--RT relationship to be largely linear \citep{smith2008optimal,smith2013-log-reading-time,wilcox2023testing,shain2024large}, we restrict ourselves to linear $\model$.
Given a trained regressor $\model$ and a new data point $\vars$, we can use $\model$ to either predict $\yhat = \model(\vars)$ or to estimate the probability of observing a specific $\ybar$ given an $\vars$: $\pmodel(\ybar \mathop{\mid} \vars) \mathop{=} \frac{(\ybar \mathop{-} \model(\vars))^2}{\sigma^2}$, where 
$\sigma^2$ is the regressor's variance
estimated on the training set.
The log-likelihood $\llh(\model, \dataset)$ of a dataset $\dataset$ under $\model$ is then given by the (log of the) joint probability of observing those data points according to $\model$.

\begin{table*}[t]
\centering
\resizebox{\textwidth}{!}{
\begin{tabular}{lcccccccc}
\toprule
   & \multicolumn{4}{c}{\textbf{\Diversesurprisal}} & \multicolumn{4}{c}{\textbf{Information value}} \\
   \cmidrule(lr){2-5} \cmidrule(lr){6-9}
   & Non-contextual & Contextual & POS & Orthographic & Non-contextual & Contextual & POS & Orthographic
     \\
\midrule

Brown & {-0.02}\phantom{$^{***}$}& {-0.04}\phantom{$^{***}$}& \phantom{-}{1.83}\phantom{$^{***}$}& \phantom{-}{0.46}\phantom{$^{***}$}& {-0.01}\phantom{$^{***}$}& {-0.04}\phantom{$^{***}$}& \phantom{-}{0.07}\phantom{$^{***}$}& \phantom{-}{0.21}\phantom{$^{***}$} \\
Dundee & \phantom{-}\textcolor{mygreen}{0.13}$^{***}$& \phantom{-}{0.00}\phantom{$^{***}$}& \phantom{-}{0.35}\phantom{$^{***}$}& \phantom{-}{0.24}\phantom{$^{***}$}& \phantom{-}\textcolor{mygreen}{0.14}$^{***}$& \phantom{-}{0.00}\phantom{$^{***}$}& \phantom{-}{0.01}\phantom{$^{***}$}& \phantom{-}{0.02}\phantom{$^{***}$} \\
Natural Stories & \phantom{-}\textcolor{mygreen}{0.50}$^{***}$& \phantom{-}\textcolor{mygreen}{0.32}$^{*}$\phantom{$^{**}$}& \phantom{-}{0.57}\phantom{$^{***}$}& {-0.03}\phantom{$^{***}$}& \phantom{-}\textcolor{mygreen}{0.58}$^{***}$& \phantom{-}\textcolor{mygreen}{0.32}$^{*}$\phantom{$^{**}$}& \phantom{-}{0.04}\phantom{$^{***}$}& \phantom{-}{0.07}\phantom{$^{***}$} \\
Provo & \textcolor{myred}{-0.18}$^{***}$& \phantom{-}{0.04}\phantom{$^{***}$}& \phantom{-}{0.86}\phantom{$^{***}$}& {-0.15}\phantom{$^{***}$}& \textcolor{myred}{-0.19}$^{**}$\phantom{$^{*}$}& \phantom{-}{0.02}\phantom{$^{***}$}& {-0.01}\phantom{$^{***}$}& {-0.21}\phantom{$^{***}$} \\
\bottomrule
\end{tabular}}%
\vspace{-5pt}
\caption{$\deltallh$ (in $10^{-2}$ nats) over baseline \emph{with} surprisal terms when adding a \diversesurprisal or information value term for each of the current and previous 3 words. Monte Carlo (MC) estimation with 50 samples.}
\label{tab:delta_llh_all}
\vspace{-8pt}
\end{table*}

\subsection{Predictors}\label{sec:predictors} 

In all of our experiments, predictors $\varsn$ include two baseline variables typically used in RT analyses: word length (measured in characters) and unigram frequency.   
To account for spillover effects, which are caused by continued processing of previous words~\cite{just1982paradigms,frank2013reading},
we include in $\varsn$ these variables for the current word $\word_n$, as well as for the three words preceding it.\looseness=-1

For our information-theoretic predictors, we use estimators. 
Our estimation of \cref{eq:surprisal,eq:mc,eq:div_surp} can be summarized as: i) we replace the distribution $p$ with a parameterized language model $\ptheta$---specifically \gptsmall---when computing \cref{eq:surprisal,eq:expectation,eq:div_surp};\footnote{This is standard practice in psycholinguistics \citep[\textit{inter alia}]{smith2008optimal,goodkind-bicknell-2018-predictive,wilcox2020predictive}. 
}
ii) when it is too computationally expensive to sum over the entire vocabulary---which is required for exactly computing the expectations in \cref{eq:expectation,eq:div_surp}---we use a Monte Carlo estimator (with 50 samples) for \diversesurprisal and next-word information value. 
More details on these information-theoretic estimators, as well as on methods for estimating unigram frequencies, are provided in \cref{app:exp_setup}.\looseness=-1

\subsection{Similarity Functions}
For both \diversesurprisal and information value, we consider several similarity functions;
for each similarity function, we define an analogous distance as $\distfunc\!(\word,\! \word')\!=\!1\!-\!\similarityfunc\!(\word,\! \word')$. 
Details about precise estimation procedures are in \cref{app:exp_setup}. 
\paragraph{Word Embedding Similarity. }
Let $\embed: \vocab \rightarrow \mathbb{R}^d$ be a word embedding function, which may or not depend on context $\words_{<t}$.
We compute the similarity between $\word$ and $\word'$ as the normalized cosine similarity:\looseness=-1
\begin{equation}\label{eq:sem_sim}
    \similarityfunc(\word, \word') \mathop{=} \frac{1}{2} \!\! \left(\frac{\embed(\word)\cdot \embed(\word')}{||\embed(\word)||\,||\embed(\word')||} \mathop{+} 1\!\right)
\end{equation}
When computed using word embeddings, cosine similarity has proven itself a good metric of semantic similarity \cite{erk-2009-representing,pennington-etal-2014-glove}. 
We again use \gptsmall{} in all of our experiments as $\embed$ to produce non-contextual and contextual word embeddings.

\paragraph{Part-of-Speech Similarity.}
We use a measure of part-of-speech (POS) similarity as a notion of syntactic similarity: 
\begin{equation}
    \similarityfunc(\word, \word') \mathop{=}
\begin{cases}
    1, &\!\! \text{if}\, \textsc{pos}_{\words_{<t}}(\word) = \textsc{pos}_{\words_{<t}}(\word' )\\
    0, & \!\!\text{otherwise}
\end{cases}
\end{equation}
where $\textsc{pos}_{\words_{<t}}$ is a POS-tagging model. We use the \texttt{pos-fast} model of the \texttt{flair} library.\footnote{\url{https://github.com/flairNLP/flair}}

\paragraph{Orthographic Similarity.}
We further use a normalized version of string edit (Levenshtein) distance to quantify orthographic similarity. Let $\distfuncbase_{\textsc{l}}(\word,\word')$ be the edit distance between $\word$ and $\word'$. Our orthographic similarity function is then
\begin{equation}
    \similarityfunc(\word, \word') \mathop{=} 1 - 
 \frac{\distfuncbase_{\textsc{l}}(\word,\word')}{\max\{|\word|,|\word'|\}}
\end{equation}
where $|\cdot|$ measures string length in characters.

\subsection{Evaluation}
\newcommand{\testdata}{\dataset_{\scaleto{\mathrm{test}}{4pt}}}
We quantify the predictive power of a predictor as the change in log-likelihood ($\deltallh$) of held-out data points $\testdata$ between a regressor $\modelone$ that includes this predictor and another, $\modeltwo$, that does not:
\begin{equation}
    \deltallh = \llh(\modelone, \testdata) - \llh(\modeltwo, \testdata)~.
\end{equation}
This is a standard measure of predictive power in psycholinguistics \cite[][]{goodkind-bicknell-2018-predictive,wilcox2020predictive}.
We estimate $\deltallh$ via 10-fold cross-validation: we use 9 data folds at a time to estimate the parameters of $\modelone$ and $\modeltwo$ and compute $\deltallh$ on the 10$^\text{th}$ fold;
we report mean $\deltallh$ across folds. We run paired permutation tests with these 10-fold results to evaluate statistical significance.\looseness=-1

\section{Results and Discussion}

\paragraph{Predictive power of \diversesurprisal.}
We evaluate the psycholinguistic predictive power that \diversesurprisal and information value provide beyond standard surprisal. 
%ADD BACK %\footnote{Results of confirmatory experiments on the predictive power of surprisal, \diversesurprisal and information value above baseline variables can be found in \cref{app:add_results}.}
To this end, we compute $\deltallh$ (\cref{tab:delta_llh_all}) when adding these predictors to a regressor that already includes surprisal.  
In Natural Stories, we find \diversesurprisal and information value provide predictive power complementary to standard surprisal when using embedding-based (both contextual and non-contextual) similarity functions.
In Dundee, significant additional predictive power only results from using non-contextual embedding similarities.
Meanwhile, in Provo and Brown, \diversesurprisal and information value do not add predictive power beyond surprisal; they are significant predictors when evaluated against a control baseline, but less so than surprisal (see \cref{tab:delta_llh_control_baseline,tab:delta_llh_replace_surprisal} in \cref{app:add_results}). In \cref{app:add_results}, we also present results when 
exponentiating our definition of word pair similarity in \cref{eq:sem_sim} by an $\alpha$ such that \diversesurprisal converges to standard surprisal as $\alpha \rightarrow \infty$; we find that \diversesurprisal's predictive power is not strongly influenced by such choice.

The differences in predictive power of the two \diverse measures follow other notable trends across datasets. 
Predictive power is lowest on Provo, where stimuli have an average of only 50 words each, followed by Brown (553 words); it is highest for Natural Stories and Dundee, both containing stimuli whose average lengths are above 1000 words. 
These results suggest equipping surprisal with semantic measures of word predictability is beneficial when the psycholinguistic measurements at hand are collected for stimuli situated in broader discourse contexts. 
Other factors, such as the texts' style or topic, may have also played a role in these differences across datasets.

\paragraph{Broader implications.}
There is also an interpretation of these results as corroborating recently proposed theories of language comprehension.  
The Natural Stories corpus contains low-frequency (albeit still grammatically correct) syntactic constructions. 
Thus, in this corpus, we encounter continuations that are less predictable from the context but have high similarity with more predictable alternative continuations. 
The result that our semantic variants of \diversesurprisal and information value provide significant predictive power over standard surprisal, particularly for this dataset, can be taken as support for models of heuristic processing (e.g., \citealp{li-2023}; see \cref{app:related_work} for further discussion). 
Similarly, the overall higher predictive power provided by non-contextual similarity functions (when compared to contextual ones) could be taken as evidence that incremental comprehension effort is more sensitive to forms of shallow semantic processing \cite{barton1993case,daneman2007shallow} than to deep integration of contextualized word meaning.
However, due to the known sensitivity of contextual embeddings to word-unspecific sentential information \cite{klafka-ettinger-2020-spying,erk2022word}, %miaschi-dellorletta-2020-contextual
further analysis with semantic similarity functions is required to confirm this finding.

\section{Conclusion}
This work introduces \diversesurprisal: 
a measure of contextual word predictability that takes into account word similarities.
By equipping surprisal with the ability to account for words' relationships, we reconcile surprisal theory's predictions with those of information value and demonstrate their mathematical relationship. 
Our experimental results on RT data indicate \diversesurprisal has predictive power beyond that of standard surprisal, thus validating and enriching surprisal theory. 
Points for future research include analyzing similarity functions that capture different characteristics of word meaning, as well as measuring the predictive power of \diversesurprisal for other indices of processing difficulty, such as N400 and other event-related brain potentials.\looseness=-1

\section*{Limitations}
We do not provide a comprehensive assessment of different design choices and experimental settings, limiting the definitiveness with which we can draw conclusions about the efficacy of \diversesurprisal as a predictor of language comprehension. These different choices and settings deserve further exploration. 
The reading time datasets that we employ are in English, and thus, we can only draw conclusions about reading behavior in the English language.  
Further, we only consider three functions for computing word similarities. 
As the functional form of the surprisal--reading time relationship has proven to be quite important for the psycholinguistic predictive power of surprisal, it is conceivable that the choice of similarity function could likewise have a large impact on the psycholinguistic predictive power of our diverse predictors. 

\section*{Acknowledgments}
Clara Meister was supported by a Google PhD Fellowship. 
Mario Giulianelli was supported by an ETH Zurich Postdoctoral Fellowship. 
We thank our anonymous reviewers for their insightful feedback and helpful pointers to related works.

\bibliography{custom}

\appendix
\onecolumn

\section{Relationship between Information Value and \DiverseSurprisal}
\label{app:semsurp_vs_infovalue}
\semsurpvsinfovalue*
\vspace{-15pt}
\begin{align}
    \infovalue(\word_t) \monotonicfunc \diversesurp(\word_t)
\end{align}
\begin{proof}
    Using the relationship $\distfunc(\word_t, \word') = 1 - \similarityfunc(\word_t, \word')$, simple algebraic manipulation shows that:
    \begin{align}
        \infovalue(\word_t) 
        &= \sum_{\word' \in \vocab} p(\word' \mid \words_{<t})\, \distfunc(\word_t, \word') \nonumber \\
        &= \sum_{\word' \in \vocab} p(\word' \mid \words_{<t})\, \left(1 -  \similarityfunc(\word_t, \word')\right) \nonumber \\
        &= 1 - \sum_{\word' \in \vocab} p(\word' \mid \words_{<t})\, \similarityfunc(\word_t, \word') \nonumber \\
        &\monotonicfunc - \log \sum_{\word' \in \vocab} p(\word' \mid \words_{<t})\, \similarityfunc(\word_t, \word') \nonumber \\
        &= \diversesurp(\word_t \mid \words_{<t})
    \end{align}
    where $\monotonicfunc$ indicates a monotonic, strictly increasing relationship and follows from the fact that $\log$ is a monotone, strictly increasing function.
\end{proof}

\section{The Mathematical Relationship between Processing Costs and Surprisal}\label{app:cost}
\subsection{Equivalence between Surprisal and KL Divergence}

A word's surprisal is equivalent to the Kullback--Leibler (KL) divergence between two probability distributions over a sentence's potential meanings: one with and one without knowledge of that word \citep{levy2008expectation}.  
Formally, let $\meaningspace$ be the space of potential sentence meanings, let $\meaningvar \in \meaningspace$ be a meaning and let $p(\meaningvar \mid \words_{<t})$ be the probability of meaning $\meaningvar\in\meaningspace$ conditioned on a prefix $\words_{<t}$. 
We can define the cost of reading a word $\word_t$ as the amount of energy spent to update this distribution. 
Assuming this energy consumption is a function of the distance between the prior and posterior distributions over meanings after observing $\word_t$, we can define cost according to the $\kl$ divergence between these two distributions, a standard measure of the difference between distributions:
\begin{align}
\costfunc(\word_t)&\defeq \kl(p(\meaningvar \mid \words_{< t} \circ \word_t) \mid\mid p(\meaningvar \mid \words_{<t})) \nonumber \\
    &= \sum_{\meaningvar \in \meaningspace} p(\meaningvar, \words_{< t} \circ \word_t) \log \frac{p(\meaningvar \mid \words_{< t} \circ \word_t)}{p(\meaningvar \mid \words_{< t})}~.
\end{align}
Under standard assumptions,\footnote{
We assume, as in \citet{levy2008expectation}, that $p(\words_{<t} \circ \word_t \mid \meaningvar)$ is deterministic, i.e., there is only one sequence $\words_{<t} \circ \word_t$ which can be used to convey each meaning $\meaningvar$. While perhaps unrealistic, we can still draw insights from this result. \label{footnote}} 
we can show that this divergence is equivalent to word $\word_t$'s surprisal.

\begin{restatable}{theorem}{surprisaltheorycost} 
\label{theorem:surprisaltheorycost}
\textbf{Cost equals information content} \citep[result from][reiterated here in the notation of this paper]{levy2008expectation}.
Under standard assumptions about $p(\meaningvar, \words_{<t} \circ \word_t)$, we can show that:
\begin{align}
    \costfunc(\word_t) = \surp(\word_t)~.
\end{align}
\end{restatable}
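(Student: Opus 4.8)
The plan is to start from the definition $\costfunc(\word_t) = \kl\big(p(\meaningvar \mid \words_{<t} \circ \word_t) \,\|\, p(\meaningvar \mid \words_{<t})\big)$ and collapse the divergence to a ratio of prefix probabilities, using the determinism assumption to cancel the meaning-conditional likelihoods. First I would expand each posterior inside the logarithm via Bayes' rule, writing $p(\meaningvar \mid \words_{<t} \circ \word_t) = p(\words_{<t} \circ \word_t \mid \meaningvar)\, p(\meaningvar) / p(\words_{<t} \circ \word_t)$ and analogously for $p(\meaningvar \mid \words_{<t})$. The prior $p(\meaningvar)$ cancels in the ratio, leaving
\[
    \frac{p(\meaningvar \mid \words_{<t} \circ \word_t)}{p(\meaningvar \mid \words_{<t})} = \frac{p(\words_{<t} \circ \word_t \mid \meaningvar)}{p(\words_{<t} \mid \meaningvar)} \cdot \frac{p(\words_{<t})}{p(\words_{<t} \circ \word_t)}.
\]

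Next I would invoke the assumption from the footnote that $p(\cdot \mid \meaningvar)$ is deterministic, with a unique string realizing each meaning, together with the fact that this realization is generated incrementally: any meaning $\meaningvar$ that assigns probability one to the full observed string $\words_{<t} \circ \word_t$ must also assign probability one to its prefix $\words_{<t}$. Hence, for every $\meaningvar$ in the support of the posterior $p(\cdot \mid \words_{<t} \circ \word_t)$ --- which is exactly the set of meanings contributing nonzero weight to the KL sum --- both $p(\words_{<t} \circ \word_t \mid \meaningvar)$ and $p(\words_{<t} \mid \meaningvar)$ equal one, so the first factor above is $1$. The log term therefore reduces to the constant $\log\!\big(p(\words_{<t}) / p(\words_{<t} \circ \word_t)\big)$, independent of $\meaningvar$.

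Finally I would pull this constant out of the sum $\sum_{\meaningvar \in \meaningspace} p(\meaningvar \mid \words_{<t} \circ \word_t)\,(\cdot)$, whose remaining weights sum to one, to obtain
\[
    \costfunc(\word_t) = \log \frac{p(\words_{<t})}{p(\words_{<t} \circ \word_t)} = -\log \frac{p(\words_{<t} \circ \word_t)}{p(\words_{<t})} = -\log p(\word_t \mid \words_{<t}) = \surp(\word_t),
\]
where the penultimate equality is the chain rule $p(\words_{<t} \circ \word_t) = p(\words_{<t})\, p(\word_t \mid \words_{<t})$. The one genuinely delicate step is the prefix-consistency argument in the second paragraph: carefully justifying that a meaning compatible with the observed string is necessarily compatible with every prefix of it, so that the likelihood ratio is exactly one on the posterior's support. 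This is the substantive content of the "standard assumptions about $p(\meaningvar, \words_{<t} \circ \word_t)$"; everything else is Bayes' rule, the chain rule, and normalization.
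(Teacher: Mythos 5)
Your proof is correct and takes essentially the same route as the paper: both collapse the KL divergence by noting that, under the determinism assumption, the likelihood-ratio factor equals $1$ on the posterior's support, so the log term is the constant $-\log p(\word_t \mid \words_{<t})$ and the posterior weights sum out to one. The only cosmetic difference is that you apply Bayes' rule to the full string and the prefix separately and cancel via prefix consistency, whereas the paper conditions on $\words_{<t}$ directly and uses $p(\word_t \mid \meaningvar, \words_{<t}) = 1$; by the chain rule these are the same step.
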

\begin{proof}
Let $p(\words_{<t} \circ \word_t \mid \meaningvar)$ be deterministic, i.e., there is only one sequence $\words_{<t} \circ \word_t$ which can be used to convey each meaning $\meaningvar$. 
From Bayes theorem, we then have that:
\begin{subequations}
\begin{align}
    p(\meaningvar \mid \words_{< t} \circ \word_t) 
    &= \frac{\overbrace{p(\word_t \mid \meaningvar, \words_{< t})}^{\text{$=1$ because deterministic}}\,p(\meaningvar \mid \words_{< t})}{p(\word_t \mid \words_{< t})} \\
    &= \frac{p(\meaningvar \mid \words_{< t})}{p(\word_t \mid \words_{< t})}
\end{align}
\end{subequations}
We now use this equality to arrive at the desired result:
\begin{subequations}
    \begin{align}
        \costfunc(\word_t) 
        &= \kl(p(\meaningvar \mid \words_{< t} \circ \word_t) \mid\mid p(\meaningvar \mid \words_{<t})) \\
        &= \sum_{\meaningvar \in \meaningspace} p(\meaningvar, \words_{< t} \circ \word_t) \log \frac{p(\meaningvar \mid \words_{< t} \circ \word_t)}{p(\meaningvar \mid \words_{< t})} \\
        &= \sum_{\meaningvar \in \meaningspace} p(\meaningvar, \words_{< t} \circ \word_t) \log \frac{p(\meaningvar \mid \words_{< t})}{p(\word_t \mid \words_{< t})\,p(\meaningvar \mid \words_{< t})} \\
        &= \sum_{\meaningvar \in \meaningspace} p(\meaningvar, \words_{< t} \circ \word_t) \log \frac{1}{p(\word_t \mid \words_{< t})} \\
        &= \log \frac{1}{p(\word_t \mid \words_{< t})} \\
        &= \surp(\word_t)
    \end{align}
\end{subequations}
\end{proof}

\subsection{\DiverseSurprisal as a KL Divergence}
\label{app:proof_semanticsurprisaltheorycost}
In this section, we first define a number of similarity-aware distributions as:
\begin{subequations} \label{eq:pz_definitions}
\begin{align}
    &\pz(\word_t \mid \words_{<t}) \defeq \sum_{\word' \in \vocab} \similarityfunc(\word_t, \word')\, p(\word' \mid \words_{<t})
    & \!\!\!\!\!\!\!\!\!\!\mathcomment{expectation in \diversesurprisal} \\
    &\pz(\word_t \mid \meaningvar, \words_{< t}) \defeq \sum_{\word' \in \vocab} \similarityfunc(\word_t, \word')\, p(\word' \mid \meaningvar, \words_{< t})
    & \mathcomment{analogous to $\pz(\word_t \mid \words_{<t})$} \\
    &\pz(\meaningvar \mid \words_{<t}) \defeq p(\meaningvar \mid \words_{<t}) 
    & \mathcomment{does not depend on $\word_t$} \\
    &\pz(\meaningvar \mid \words_{< t} \circ \word_t) \defeq \frac{\pz(\word_t \mid \meaningvar, \words_{< t})\,\pz(\meaningvar \mid \words_{< t})}{\pz(\word_t \mid \words_{< t})} 
    & \mathcomment{Bayes-inspired definition}\label{eq:bayes_insp}
\end{align}
\end{subequations}
where \cref{eq:bayes_insp} is ``Bayes-inspired'' because $\pz$ is not necessarily a valid probability distribution (it does not necessarily sum to 1 across its support) and so the equivalence given by Bayes theorem is not guaranteed; rather it is an equivalence that we enforce in the definition of $\pz$.

We now present a theorem linking \diversesurprisal and processing cost, under the definitions above.
\begin{restatable}{theorem}{semanticsurprisaltheorycost} 
\label{theorem:semanticsurprisaltheorycost}
% \begin{theorem} 
\textbf{Cost equals \diversesurprisal}.
Under standard assumptions\cref{footnote} about $\pz(\meaningvar, \words_{<t} \circ \word_t)$ and using the definitions in \cref{eq:pz_definitions}, we can show that:
\begin{align}
    \diversecostfunc(\word_t) = \diversesurp(\word_t)
\end{align}
% \end{theorem}
\end{restatable}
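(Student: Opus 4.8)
The plan is to mimic the structure of the proof of \cref{theorem:surprisaltheorycost}, but carry out the Bayesian cancellation using the ``similarity-aware'' distributions $\pz$ defined in \cref{eq:pz_definitions} in place of the ordinary conditional distributions. First I would unfold the definition $\diversecostfunc(\word_t) \defeq \kl\bigl(\pz(\meaningvar \mid \words_{<t} \circ \word_t) \mid\mid \pz(\meaningvar \mid \words_{<t})\bigr)$, i.e.\ the same $\kl$ expression that defines $\costfunc$ but with every distribution over meanings replaced by its $\pz$-counterpart. Writing this out as a sum over $\meaningvar \in \meaningspace$ gives a $\log$ of the ratio $\pz(\meaningvar \mid \words_{<t} \circ \word_t) / \pz(\meaningvar \mid \words_{<t})$.

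Next I would substitute the Bayes-inspired identity \cref{eq:bayes_insp}, namely $\pz(\meaningvar \mid \words_{<t} \circ \word_t) = \pz(\word_t \mid \meaningvar, \words_{<t})\,\pz(\meaningvar \mid \words_{<t}) / \pz(\word_t \mid \words_{<t})$, into the numerator. Since $\pz(\meaningvar \mid \words_{<t}) \defeq p(\meaningvar \mid \words_{<t})$ sits in both numerator and denominator, it cancels, leaving the ratio $\pz(\word_t \mid \meaningvar, \words_{<t}) / \pz(\word_t \mid \words_{<t})$ inside the $\log$. Then I invoke the ``standard assumptions'' analogue of \cref{footnote}: just as $p(\word_t \mid \meaningvar, \words_{<t}) = 1$ when the map from meaning to surface string is deterministic, the same determinism forces $\pz(\word_t \mid \meaningvar, \words_{<t}) = \sum_{\word'} \similarityfunc(\word_t, \word')\, p(\word' \mid \meaningvar, \words_{<t}) = \similarityfunc(\word_t, \word_t^{\meaningvar})$ where $\word_t^{\meaningvar}$ is the unique word that meaning $\meaningvar$ produces in this context; and under the further stipulation that we only compare $\word_t$ against the meaning it actually realizes (so $\word_t^{\meaningvar} = \word_t$ on the support of the joint), this equals $\similarityfunc(\word_t, \word_t) = 1$. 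Hence the numerator collapses to $1$, the $\log$ ratio becomes $-\log \pz(\word_t \mid \words_{<t})$, which is independent of $\meaningvar$, so it pulls out of the sum; the remaining $\sum_{\meaningvar} \pz(\meaningvar, \words_{<t} \circ \word_t)$ is a total mass that equals $1$ (again under the standard-assumptions normalization), giving $\diversecostfunc(\word_t) = -\log \pz(\word_t \mid \words_{<t}) = \diversesurp(\word_t)$ by the first line of \cref{eq:pz_definitions} and \cref{eq:div_surp}.

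The main obstacle is pinning down exactly which ``standard assumptions'' make $\pz(\word_t \mid \meaningvar, \words_{<t})$ collapse and which make $\sum_{\meaningvar} \pz(\meaningvar, \words_{<t} \circ \word_t) = 1$: unlike in \cref{theorem:surprisaltheorycost}, $\pz$ is not a genuine probability distribution (it need not normalize), so the clean Bayes cancellation and the ``sum of the joint is one'' step are not automatic — they have to be imposed as part of the definitional setup (which is precisely why \cref{eq:bayes_insp} is called ``Bayes-inspired''). I would therefore be explicit that the determinism assumption of \cref{footnote} is what yields $\pz(\word_t \mid \meaningvar, \words_{<t}) = 1$, and that the joint $\pz(\meaningvar, \words_{<t}\circ\word_t)$ is taken to be $\pz(\meaningvar \mid \words_{<t}\circ\word_t)\,\pz(\word_t\mid\words_{<t})\,p(\words_{<t})$ so that summing over $\meaningvar$ telescopes back to $1$; everything else is the same algebraic manipulation as in the proof of \cref{theorem:surprisaltheorycost}.
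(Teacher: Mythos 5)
Your proposal is correct and follows essentially the same route as the paper: substitute the Bayes-inspired definition of $\pz(\meaningvar \mid \words_{<t} \circ \word_t)$, cancel $\pz(\meaningvar \mid \words_{<t}) = p(\meaningvar \mid \words_{<t})$, use the determinism assumption together with $\similarityfunc(\word_t, \word_t) = 1$ to collapse $\pz(\word_t \mid \meaningvar, \words_{<t})$ to $1$, and pull the $\meaningvar$-independent term $-\log \pz(\word_t \mid \words_{<t})$ out of the expectation to obtain $\diversesurp(\word_t)$. The single point where you diverge is the definition of the similarity-adjusted divergence: you take the expectation weight to be the $\pz$-posterior as well, which (as you rightly flag) is not guaranteed to normalize and forces you to stipulate a form for the joint so that the leftover mass sums to one; the paper instead defines $\diversekl$ so that the expectation is taken under the genuine posterior $p(\meaningvar \mid \words_{<t} \circ \word_t)$, with only the arguments of the logarithm replaced by their $\pz$ counterparts, so the final ``mass equals one'' step is automatic and no extra assumption on the joint is needed. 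Since that convention is only spelled out alongside the paper's own proof, your identified obstacle and workaround are sensible, but under the paper's definition they are unnecessary; everything else in your argument matches the paper step for step.
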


\begin{proof}
% Definitions:
First, we provide a helpful equivalence for $\pz(\meaningvar \mid \words_{< t} \circ \word_t)$:
\begin{subequations}
\begin{align}
    \pz(\meaningvar \mid \words_{< t} \circ \word') 
    &= \frac{\pz(\word' \mid \meaningvar, \words_{< t})\,\pz(\meaningvar \mid \words_{< t})}{\pz(\word' \mid \words_{< t})} \\
    &= \frac{\Big(\sum_{\word' \in \vocab} \similarityfunc(\word_t, \word')\, p(\word' \mid \meaningvar, \words_{< t})\Big)\,p(\meaningvar \mid \words_{< t})}{\sum_{\word' \in \vocab} \similarityfunc(\word_t, \word')\, p(\word' \mid \words_{< t})} 
    & \mathcomment{expand $\pz$} \\
    &= \frac{\similarityfunc(\word_t, \word_t)p(\word_t \mid \meaningvar, \words_{< t})\,p(\meaningvar \mid \words_{< t})}{\sum_{\word' \in \vocab} \similarityfunc(\word_t, \word')\, p(\word' \mid \words_{< t})}
    & \!\!\!\!\!\!\!\!\!\!\!\!\!\!\!\!\!\!\!\!\!\!\!\!\!\!
    \mathcomment{deterministic $p(\word' \mid \meaningvar, \words_{< t})$}\\
    &= \frac{p(\meaningvar \mid \words_{< t})}{\sum_{\word' \in \vocab} \similarityfunc(\word_t, \word')\, p(\word' \mid \words_{< t})} 
    & \!\!\!\!\!\!\!\!\!\!\!\!\!\!\!\!\!\!\!\!\!\!\!\!\!\!\!\!\!\!
    \!\!\!\!\!\!\!\!\!\!\!\!\!\!\!\!\!\!\!\!\!\!\!\!\!\!\!\!\!\! 
    \mathcomment{$p(\word_t \mid \meaningvar, \words_{< t})=1$}
\end{align}
\end{subequations}
Given these equalities, we can follow the same logic as in \cref{theorem:surprisaltheorycost} to show that processing cost in the presence  of similarity-aware distributions over words has an equivalence with \diversesurprisal:
\begin{subequations}
    \begin{align}
        \diversecostfunc(\word_t \mid \words_{<t}) 
        &= \diversekl(p(\meaningvar \mid \words_{< t} \circ \word_t) \mid\mid p(\meaningvar \mid \words_{<t})) \\
        &= \sum_{\meaningvar \in \meaningspace} p(\meaningvar \mid \words_{< t} \circ \word_t) \log \frac{\pz(\meaningvar \mid \words_{< t} \circ \word_t)}{\pz(\meaningvar \mid \words_{< t})} \\
        &= \sum_{\meaningvar \in \meaningspace} p(\meaningvar \mid \words_{< t} \circ \word_t) \log \frac{p(\meaningvar \mid \words_{< t})}{\Big(\sum_{\word' \in \vocab} \similarityfunc(\word_t, \word')\, p(\word' \mid \words_{< t})\Big)\, p(\meaningvar \mid \words_{< t})} \\
        &= \sum_{\meaningvar \in \meaningspace} p(\meaningvar \mid \words_{< t} \circ \word_t) \log \frac{1}{\sum_{\word' \in \vocab} \similarityfunc(\word_t, \word')\, p(\word' \mid \words_{< t})} \\
        &= \log \frac{1}{\sum_{\word' \in \vocab} \similarityfunc(\word_t, \word')\, p(\word' \mid \words_{< t})} \\
        &= - \log \pz(\word_t \mid \words_{< t}) \\
        % 
        % &= \sum_{\meaningvar \in \meaningspace} p(\meaningvar \mid \words_{< t} \circ \word_t) \log \frac{1}{\similaritymatrix\, p(\word_t \mid \words_{< t})} \\
        % &= \log \frac{1}{\similaritymatrix\, p(\word_t \mid \words_{< t})} \\
        % &= \log \frac{1}{\sum_{\word' \in \vocab} \similarityfunc(\word_t, \word')\, p(\word' \mid \words_{<t})} \\
        &= \diversesurp(\word_t)
    \end{align}
\end{subequations}
where $\diversekl$ is defined analogously to both standard $\kl$ and \diverse entropy: 
it implements the same expectation as standard $\kl$, but takes the $\log$ of distributions $\pz$ instead.
\end{proof}

\section{Related Work in Psycholinguistics}\label{app:related_work} 

\newcommand{\postag}{\textsc{pos}}
\newcommand{\postagspace}{\mathcal{C}}

In this section, we review some related work in more detail, and when possible connect it to \diversesurprisal. 
\citet{arehalli-etal-2022-syntactic} investigate a word's syntactic surprisal---i.e., the surprisal associated with the syntactic structure
implied by that word---as a predictor of reading comprehension behavior. 
They define this value as:
\begin{align}
    - \log \sum_{\word' \in \vocab} \underbrace{\sum_{\postag \in \postagspace} p(\postag \mid \words_{<t} \circ \word_t)\,p(\postag \mid \words_{<t} \circ \word')}_{\textit{potential choice of }\similarityfunc(\word_t, \word')}\,p(\word' \mid \words_{<t})
\end{align}
where, in their case, $\postag \in \postagspace$ represents a combinatory categorial grammar \citep[CCG][]{steedman1987combinatory} supertag.
We provide a more general measure of predictability here, as we can realize their notion of syntactic surprisal in our \diversesurprisal framework by using a similarity function that identifies equivalent syntactic classes.

\newcommand{\heuristicword}{\omega}
\newcommand{\heuristicwords}{\boldsymbol\omega}

More broadly speaking, many works have employed notions of semantic similarity in their models of language comprehension \cite[][\emph{inter alia}]{roland_2012,frank_willems_2017,giulianelli-etal-2023-information,giulianelli-etal-2024-generalized,giulianelli-etal-2024-incremental,li-2023}. 
For example, \citet{li-2023} offers a decomposition of a word's surprisal into two quantities, where they specifically consider the word as perceived by a comprehender $\heuristicword_t$:\looseness=-1%
%, which they term the heuristic surprise and the discrepancy signal. 
\begin{equation}
\surp(\heuristicword_t) \defeq -\log p(\heuristicword_t \mid \heuristicwords_{<t})   =  \underbrace{\mathbb{E}\left[-\log p(\word_t \mid \words_{<t}) \right]}_{\textit{heuristic surprise}} + \underbrace{\mathbb{E}\left[\log \frac{p(\word_t \mid \words_{<t})}{p(\heuristicword_t \mid \heuristicwords_{<t})} \right]}_{\textit{discrepancy signal}}
\end{equation}
Here, $\word_t$ represents the ground truth word at time $t$, which they call a ``heuristic word''.   
Similarly to our semantic variant of \diversesurprisal, they use a notion of semantic distance to estimate the latter quantity. Our works differ in several ways though, most notably in that we do not propose a new model of language comprehension, but rather introduce an alternative definition of surprisal.

\section{Experimental Setup} \label{app:exp_setup}
Code for reproducing experimental results can be found at \url{https://github.com/cimeister/diverse-surprisal}. 
\subsection{Data}
We use four reading time datasets. 
Per-word reading time is measured according to one of two paradigms: self-paced and eye-tracked reading.
The self-paced reading corpora are the Natural Stories Corpus \citep{futrell-etal-2018-natural} and the Brown Corpus \citep{smith2013-log-reading-time}.
The eye-tracking corpora are the Provo Corpus \citep{provo} and the Dundee Corpus \citep{dundee}. 
We refer to the original works for further details on data collection. 

Before computing our different word-level predictors, text from all corpora was pre-processed using the Moses decoder\footnote{\url{http://www.statmt.org/moses/}} tokenizer and punctuation normalizer. Additional pre-processing was performed by the tokenizers for respective neural models. Capitalization was kept intact albeit we used the lowercase version of words when querying for unigram frequency estimates. We estimate unigram frequencies following \citet{nikkarinen-etal-2021-modeling} on the WikiText 103 dataset.

\subsection{Information-Theoretic Estimators}
We estimate surprisal and information value\footnote{We use the codebase of \citet{giulianelli-etal-2023-information} to compute information value. For variants of \diversesurprisal and information value that require estimators, we use 50 samples in all experiments.} using \gptsmall{} \cite{gpt2};\footnote{We use the open-source version available on the \texttt{transformers} library \citep{wolf-etal-2020-transformers}.} while not the most accurate language model in terms of perplexity, prior work has shown \gptsmall{} to have better psycholinguistic predictive power than its larger counterparts \cite{oh2023why,shain2024large}. 
Note that \gptsmall{} operates over subwords while reading time measurements are taken at the word level. 
We discuss our approaches for accounting for this characteristic for each estimator separately. 
\paragraph{Surprisal.}
We query our language model for next token probabilities; our surprisal estimate for a token is then simply the negative log of this value. 
To compute word-level estimates of surprisal, we sum these values across the tokens that constitute each word (as delineated by the reading time dataset). 
In general, surprisal decomposes additively across subunits of a word, theoretically grounding this approach.
However, as \citet{pimentel2024compute} point out, the way that subword units demarcate the beginning of a word complicates the computation of word-level surprisal estimates. They offer a simple fix for this issue, which we do not incorporate here since extending it to information value and \diversesurprisal is non-trivial. 
See also \citet{oh2024leading} for a similar discussion, and \citet{giulianelli-etal-2024-proper} for further discussion on the role of tokenization in computational psycholinguistics, as well as for a method to compute the surprisal of any character span from token-level language models.

\paragraph{Embedding-based information value and \diversesurprisal estimators.}
We likewise use \gptsmall{} for our word embeddings. 
Transformer-based language models can provide word embeddings for all of the (sub)words in their vocabulary. 
Thus, in this setting, we take $\vocab$ to be \gptsmall's subword vocabulary. 
We explore both contextual and non-contextual word embeddings in the computation of \cref{eq:sem_sim}. 
We use layer 0 for non-contextual and layer 12 (the last layer) for contextual embeddings; we leave the exploration of the use of other embedding functions, e.g., other language models, layers or aggregation across layers, to future work. 
In the case of contextual embeddings, obtaining embeddings for each $\word \in\vocab$ requires a separate query to the language model. 
Querying the model $|\vocab|\approx 50,000$ times for every context $\words_{<t}$ would be very computationally intensive, so we instead use a Monte Carlo estimator for these variants of information value and \diversesurprisal. 
Specifically, similarly to \citet{giulianelli-etal-2024-generalized}, we sample next tokens (with replacement) according to $\ptheta(\cdot\mid\words_{<t})$. 
We query our language model for only the embeddings for these tokens, and use them to make a Monte Carlo estimator of \cref{eq:mc,eq:div_surp}.\footnote{For non-contextual embeddings, we compute \cref{eq:mc,eq:div_surp} exactly, with \gptsmall's vocabulary as $\vocab$.} 
To create next-word information value and \diversesurprisal estimates from these subword-level estimates, we sum across these values for each of the tokens that constitute a word. 
Note that information value and \diversesurprisal do not decompose across subwords. 
We ran experiments where predictors were set to the value of the first subword that constituted a word and observed similar results; we omit these to reduce clutter. 

\paragraph{POS and edit distance estimators.}
These estimators cannot be computed at the subword level. Thus, we consider a full-word vocabulary. 
Because of the computational load that it would require to get language model estimates for a comprehensive vocabulary, we instead use a Monte Carlo estimator. 
Explicitly, we sample full-word continuations (with replacement) according to $\ptheta(\cdot\mid\words_{<t})$, sampling subwords until we reach either a white space marker or the end-of-sentence token. 
Note that this also allows us to avoid the task of explicitly defining a full-word vocabulary.  
We then use these continuations to build Monte Carlo estimators of \cref{eq:mc,eq:div_surp}. 

\section{Additional Experimental Results}\label{app:add_results}

\subsection{From \Diverse to Standard Surprisal}
In this experiment we equip $\similarityfunc$  with a temperature parameter $\alpha$, exponentiating our definition of word pair similarity in \cref{eq:sem_sim} by a given $\alpha$. 
% $\similarityfunc(\word_1, \word_2) \mathop{=} \left(\frac{\embed(\word_1)\cdot \embed(\word_2) + 1}{2||\embed(\word_1)||\,||\embed(\word_2)||} \!\right)^\alpha$. 
% explore the effect of using a temperature parameter with word pair semantic similarities. 
As $\alpha\mathop{\rightarrow}\infty$, all $\similarityfunc(\word, \word')$ for $\word\neq \word'$ go to $0$; on the other hand $\similarityfunc(\word, \word)$ remains at $1$. 
Thus, \diversesurprisal converges to standard surprisal as $\alpha \rightarrow \infty$. 
We observe how the psycholinguistic predictive power of \diversesurprisal changes with $\alpha$ in \cref{fig:temp_llh}. 
While varying $\alpha$ does not appear to have a significant effect on the predictive power of \diversesurprisal, we see that, as expected, the $\deltallh$ of $\diversesurpalpha$ converges to that of surprisal as $\alpha\mathop{\rightarrow}\infty$. 

\begin{figure*}[h]
    \centering
    \includegraphics[width=\textwidth]{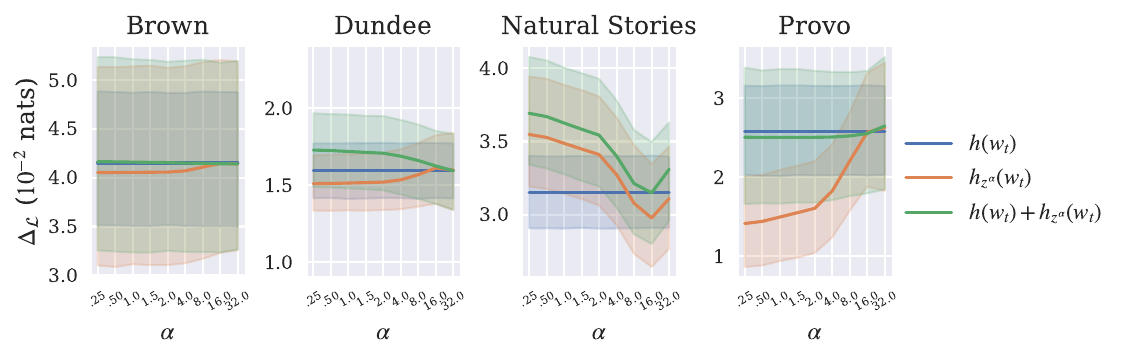}
    \caption{The change in reading time dataset log-likelihoods as a function of the temperature parameter used with the semantic-similarity function in \diversesurprisal computations. Each line corresponds to a different set of predictors added to the regressor. Shaded regions indicate 95\% confidence intervals, as computed using standard bootstrapping techniques on our per-fold $\deltallh$ values. }
    \label{fig:temp_llh}
\end{figure*}

\clearpage

\subsection{\texorpdfstring{$\deltallh$}{Delta LLH} of Different Predictors}
\begin{table*}[h]
\centering
\resizebox{\textwidth}{!}{
\begin{tabular}{lccccccccc}
\toprule
  &\textbf{Surprisal} & \multicolumn{4}{c}{\textbf{\Diversesurprisal}} & \multicolumn{4}{c}{\textbf{Information value}} \\
   \cmidrule(lr){3-6} \cmidrule(lr){7-10}
   && Non-contextual & Contextual & POS & Orthographic & Non-contextual & Contextual & POS & Orthographic
     \\
\midrule
Brown & \phantom{-}\textcolor{mygreen}{4.15}$^{***}$& \phantom{-}\textcolor{mygreen}{2.78}$^{***}$& {-0.02}\phantom{$^{***}$}& \phantom{-}{2.22}\phantom{$^{***}$}& \phantom{-}\textcolor{mygreen}{1.29}$^{*}$\phantom{$^{**}$}& \phantom{-}\textcolor{mygreen}{2.61}$^{***}$& {-0.02}\phantom{$^{***}$}& \phantom{-}\textcolor{mygreen}{0.33}$^{*}$\phantom{$^{**}$}& \phantom{-}\textcolor{mygreen}{0.71}$^{*}$\phantom{$^{**}$} \\
Dundee & \phantom{-}\textcolor{mygreen}{1.60}$^{***}$& \phantom{-}\textcolor{mygreen}{1.44}$^{***}$& \phantom{-}{0.01}\phantom{$^{***}$}& \phantom{-}{0.69}\phantom{$^{***}$}& \phantom{-}\textcolor{mygreen}{0.64}$^{*}$\phantom{$^{**}$}& \phantom{-}\textcolor{mygreen}{1.43}$^{***}$& \phantom{-}{0.01}\phantom{$^{***}$}& \phantom{-}\textcolor{mygreen}{0.26}$^{***}$& \phantom{-}\textcolor{mygreen}{0.48}$^{***}$ \\
Natural Stories & \phantom{-}\textcolor{mygreen}{3.15}$^{***}$& \phantom{-}\textcolor{mygreen}{3.18}$^{***}$& \phantom{-}\textcolor{mygreen}{0.31}$^{**}$\phantom{$^{*}$}& \phantom{-}\textcolor{mygreen}{1.04}$^{*}$\phantom{$^{**}$}& \phantom{-}\textcolor{mygreen}{0.79}$^{*}$\phantom{$^{**}$}& \phantom{-}\textcolor{mygreen}{3.18}$^{***}$& \phantom{-}\textcolor{mygreen}{0.30}$^{***}$& \phantom{-}\textcolor{mygreen}{0.46}$^{***}$& \phantom{-}\textcolor{mygreen}{0.88}$^{***}$ \\
Provo & \phantom{-}\textcolor{mygreen}{2.58}$^{***}$& \phantom{-}\textcolor{mygreen}{1.34}$^{***}$& \phantom{-}{0.05}\phantom{$^{***}$}& \phantom{-}{1.49}\phantom{$^{***}$}& \phantom{-}{0.82}\phantom{$^{***}$}& \phantom{-}\textcolor{mygreen}{1.14}$^{***}$& \phantom{-}{0.06}\phantom{$^{***}$}& \phantom{-}{0.57}\phantom{$^{***}$}& \phantom{-}{0.83}\phantom{$^{***}$} \\

\bottomrule
\end{tabular}}%
\vspace{-5pt}

\caption{$\deltallh$ (in $10^{-2}$ nats) on reading time data of regressors with different predictors over baseline regressors (i.e., regressors with only baseline predictors). Variable values for current and previous 3 words are provided as predictors. MC estimates of information value and \diversesurprisal use 50 samples per context.}
\label{tab:delta_llh_control_baseline}
% \vspace{-5pt}
\end{table*}

\begin{table}[!ht]
\centering
\resizebox{\textwidth}{!}{
\begin{tabular}{lcccccccc}
\toprule
& \multicolumn{4}{c}{\textbf{\Diversesurprisal}} & \multicolumn{4}{c}{\textbf{Information value}} \\
   \cmidrule(lr){2-5} \cmidrule(lr){6-9}
   & Non-contextual & Contextual & POS & Orthographic & Non-contextual & Contextual & POS & Orthographic
     \\
\midrule
Brown & \textcolor{myred}{-1.37}$^{***}$& \textcolor{myred}{-4.17}$^{***}$& {-1.94}\phantom{$^{***}$}& \textcolor{myred}{-2.86}$^{***}$& \textcolor{myred}{-1.54}$^{***}$& \textcolor{myred}{-4.17}$^{***}$& \textcolor{myred}{-3.82}$^{***}$& \textcolor{myred}{-3.44}$^{***}$ \\
Dundee & \textcolor{myred}{-0.16}$^{*}$\phantom{$^{**}$}& \textcolor{myred}{-1.59}$^{***}$& {-0.91}\phantom{$^{***}$}& \textcolor{myred}{-0.96}$^{***}$& \textcolor{myred}{-0.17}$^{*}$\phantom{$^{**}$}& \textcolor{myred}{-1.59}$^{***}$& \textcolor{myred}{-1.34}$^{***}$& \textcolor{myred}{-1.12}$^{***}$ \\
Natural Stories & \phantom{-}{0.03}\phantom{$^{***}$}& \textcolor{myred}{-2.84}$^{***}$& \textcolor{myred}{-2.11}$^{**}$\phantom{$^{*}$}& \textcolor{myred}{-2.37}$^{***}$& \phantom{-}{0.03}\phantom{$^{***}$}& \textcolor{myred}{-2.85}$^{***}$& \textcolor{myred}{-2.69}$^{***}$& \textcolor{myred}{-2.27}$^{***}$ \\
Provo & \textcolor{myred}{-1.24}$^{**}$\phantom{$^{*}$}& \textcolor{myred}{-2.54}$^{***}$& {-1.09}\phantom{$^{***}$}& \textcolor{myred}{-1.76}$^{*}$\phantom{$^{**}$}& \textcolor{myred}{-1.44}$^{***}$& \textcolor{myred}{-2.52}$^{***}$& \textcolor{myred}{-2.02}$^{**}$\phantom{$^{*}$}& \textcolor{myred}{-1.75}$^{**}$\phantom{$^{*}$} \\

\bottomrule
\end{tabular}}%
\vspace{-5pt}
\caption{$\deltallh$ (in $10^{-2}$ nats) on reading time data of regressors with our different predictors of interest in comparison to regressors \emph{with} surprisal (i.e., replacing all surprisal terms for current and previous words with corresponding \diversesurprisal/information value terms). MC estimates of information value and \diversesurprisal use 50 samples per context.}
\label{tab:delta_llh_replace_surprisal}
% \vspace{-5pt}
\end{table}

\begin{table*}[h]
\centering
\resizebox{\textwidth}{!}{
\begin{tabular}{lcccccccc}
\toprule
   & \multicolumn{4}{c}{\textbf{\Diversesurprisal}} & \multicolumn{4}{c}{\textbf{Information value}} \\
   \cmidrule(lr){2-5} \cmidrule(lr){6-9}
   & Non-contextual & Contextual & POS & Orthographic & Non-contextual & Contextual & POS & Orthographic
     \\
\midrule
Brown & \phantom{-}{0.01}\phantom{$^{***}$}& {-0.01}\phantom{$^{***}$}& \phantom{-}{1.00}\phantom{$^{***}$}& {-0.03}\phantom{$^{***}$}& \phantom{-}{0.02}\phantom{$^{***}$}& {-0.01}\phantom{$^{***}$}& \phantom{-}{0.00}\phantom{$^{***}$}& \phantom{-}{0.01}\phantom{$^{***}$} \\
Dundee & \phantom{-}\textcolor{mygreen}{0.12}$^{***}$& {-0.00}\phantom{$^{***}$}& \phantom{-}{0.38}\phantom{$^{***}$}& \phantom{-}\textcolor{mygreen}{0.22}$^{**}$\phantom{$^{*}$}& \phantom{-}\textcolor{mygreen}{0.14}$^{***}$& {-0.00}\phantom{$^{***}$}& {-0.00}\phantom{$^{***}$}& \phantom{-}{0.00}\phantom{$^{***}$} \\
Natural Stories & \phantom{-}\textcolor{mygreen}{0.47}$^{***}$& \phantom{-}\textcolor{mygreen}{0.27}$^{**}$\phantom{$^{*}$}& \phantom{-}{0.03}\phantom{$^{***}$}& \phantom{-}{0.02}\phantom{$^{***}$}& \phantom{-}\textcolor{mygreen}{0.57}$^{***}$& \phantom{-}\textcolor{mygreen}{0.29}$^{**}$\phantom{$^{*}$}& {-0.01}\phantom{$^{***}$}& \phantom{-}{0.00}\phantom{$^{***}$} \\
Provo & \textcolor{myred}{-0.08}$^{***}$& \phantom{-}{0.01}\phantom{$^{***}$}& {-0.03}\phantom{$^{***}$}& \phantom{-}{0.07}\phantom{$^{***}$}& {-0.07}\phantom{$^{***}$}& {-0.01}\phantom{$^{***}$}& {-0.03}\phantom{$^{***}$}& \textcolor{myred}{-0.10}$^{**}$\phantom{$^{*}$} \\
\bottomrule
\end{tabular}}%
\vspace{-5pt}
\caption{$\deltallh$ (in $10^{-2}$ nats) over baseline \emph{with} surprisal when adding a \diversesurprisal or information value term for (only) the current word $w_t$. Monte Carlo (MC) estimation with 50 samples.}
\label{tab:delta_llh_current}
\end{table*}

\end{document}